\newcommand{\labeleq}[1]{\label{equation:#1}}
\newcommand{\refeq}[1]{Eqn.~(\ref{equation:#1})}
\newcommand{\refeqb}[1]{(\ref{equation:#1})}
\newcommand{\labelfig}[1]{\label{figure:#1}}
\newcommand{\reffig}[1]{Fig.~\ref{figure:#1}}
\newcommand{\labeltab}[1]{\label{table:#1}}
\newcommand{\reftab}[1]{Tab.~\ref{table:#1}}
\newcommand{\labelsec}[1]{\label{section:#1}}
\newcommand{\refsec}[1]{Section~\ref{section:#1}}
\newcommand{\reflis}[1]{Listing~\ref{listing:#1}}
\newcommand{\R}{\mathbb{R}}
\newcommand{\hide}[1]{}
\newtheoremstyle{definitionStyle}
  {}
  {}
  {}
  {}
  {\bfseries}
  {.}
  { }
  {\thmname{#1}\thmnumber{ #2}\thmnote{ (#3)}}
\theoremstyle{definitionStyle}
\newtheorem{definition}{Definition}
\newtheorem{theorem}[definition]{Theorem}
\def\lst@numbersymbol{}
\def\lst@labellis{}
\lstdefinelanguage{maple}{
    morekeywords={Basis, tdeg, op, indets, ListTools, SearchAll, map, degree, evalb},
    sensitive=false,
    morecomment=[l]{\#},
    morestring=[b]"
}
\providecommand\color[2][]{%
  \GenericError{(gnuplot) \space\space\space\@spaces}{%
    Package color not loaded in conjunction with
    terminal option `colourtext'%
  }{See the gnuplot documentation for explanation.%
  }{Either use 'blacktext' in gnuplot or load the package
    color.sty in LaTeX.}%
  \renewcommand\color[2][]{}%
}%
\providecommand\includegraphics[2][]{%
  \GenericError{(gnuplot) \space\space\space\@spaces}{%
    Package graphicx or graphics not loaded%
  }{See the gnuplot documentation for explanation.%
  }{The gnuplot epslatex terminal needs graphicx.sty or graphics.sty.}%
  \renewcommand\includegraphics[2][]{}%
}%
\providecommand\rotatebox[2]{#2}%
  \newif\ifGPcolor
  \newif\ifGPblacktext
\let\gplgaddtomacro\g@addto@macro
\gdef\gplbacktext{}%
\gdef\gplfronttext{}%
\title{\LARGE \bf
Globally Optimal Solution to Inverse Kinematics of 7DOF Serial Manipulator
}
\author{
  Pavel Trutman\\
  CIIRC CTU in Prague
  \and
  Mohab Safey El Din\\
  Sorbonne Universit\'e, LIP6 CNRS
  \and
  Didier Henrion\\
  LAAS-CNRS, FEE CTU in Prague
  \and
  Tomas Pajdla\\
  CIIRC CTU in Prague
}
\begin{document}
\maketitle
\begin{abstract}
  The Inverse Kinematics (IK) problem is to find robot control parameters to bring it into the desired position under the kinematics and collision constraints. We present a global solution to the optimal IK problem for a general serial 7DOF manipulator with revolute joints and a quadratic polynomial objective function. We show that the kinematic constraints due to rotations can all be generated by second-degree polynomials. This is important since it significantly simplifies further step where we find the optimal solution by Lasserre relaxations of non-convex polynomial systems. We demonstrate that the second relaxation is sufficient to solve the 7DOF IK problem. Our approach is certifiably globally optimal. We demonstrate the method on the 7DOF KUKA LBR IIWA manipulator and show that we are able to compute the optimal IK or certify in-feasibility in $99~\%$ tested poses.
  \end{abstract}
  \section{Introduction}
  \noindent The Inverse Kinematics (IK) problem is one of the most important problems in robotics~\cite{shigley1980theory}. The problem is to find robot control parameters to bring it into the desired position under the kinematics and collision constraints~\cite{Jazar-2007}. 

  The IK problem has been extensively studied in robotics and control~\cite{Raghavan1993InverseKO,Raghavan1995SolvingPS}. The classical formulation~\cite{Raghavan1993InverseKO} of the problem for 6 degrees of freedom (6DOF) serial manipulators leads to solving systems of polynomial equations~\cite{Cox-IVA-2015,Wampler1990NumericalCM}. This is in general hard (``EXPSPACE complete''~\cite{MAYR1982305}) algebraic computational problem, but practical solving methods have been developed for 6DOF manipulators~\cite{Raghavan1993InverseKO,DBLP:journals/trob/ManochaC94,Diankov:2010:ACR:2125842}. 

  An important generalization of the IK problem aims at finding the optimal control parameters for an under-constrained mechanism, i.e.\  when the number of controlled joints in a manipulator is larger than six. Then, an algebraic computation problem turns into an optimization problem over an algebraic variety~\cite{Cox-IVA-2015} of possible IK solutions. It is particularly convenient to choose a polynomial objective function to arrive at a semi-algebraic optimization problem~\cite{Lasserre2001}. 

  Semi-algebraic optimization problems are in general non-convex but can be solved with certified global optimality~\cite{Lasserre2015} using the  Lasserre hierarchy  of  convex  optimization problems~\cite{Lasserre2001}. Computationally, however, semi-algebraic optimization problems are in general extremely hard 
  and were often considered too expensive to be used in practice. In this paper, we show that using ``algebraic pre-processing'' in semi-algebraic optimization methods becomes practical in solving the IK problem of general 7DOF serial manipulators with a polynomial objective function.
\begin{figure}[t]
  \includegraphics[width=0.49\textwidth]{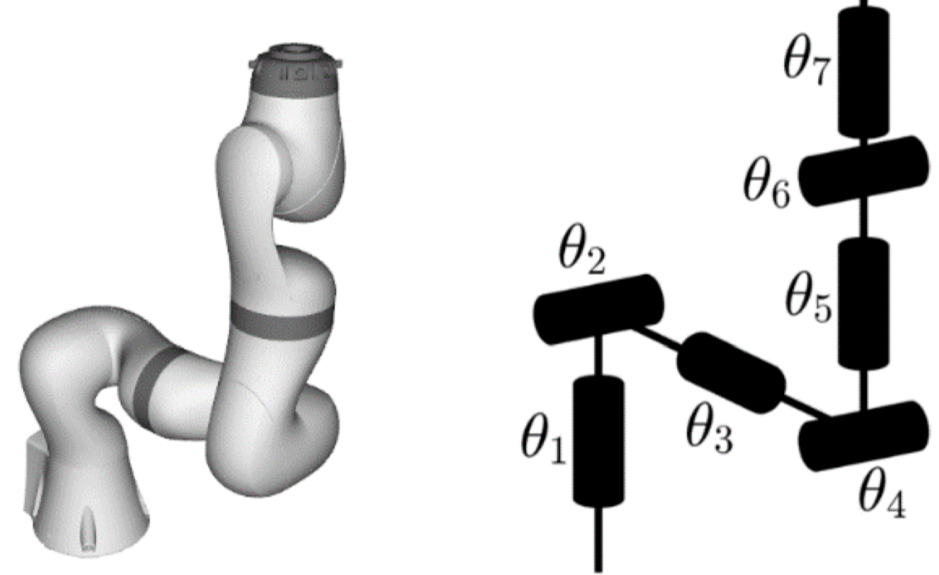} \resizebox{0.49\textwidth}{!}{
\begingroup
  \makeatletter
  \gdef\gplbacktext{}%
  \gdef\gplfronttext{}%
  \makeatother
  \ifGPblacktext
    \def\colorrgb#1{}%
    \def\colorgray#1{}%
  \else
    \ifGPcolor
      \def\colorrgb#1{\color[rgb]{#1}}%
      \def\colorgray#1{\color[gray]{#1}}%
      \expandafter\def\csname LTw\endcsname{\color{white}}%
      \expandafter\def\csname LTb\endcsname{\color{black}}%
      \expandafter\def\csname LTa\endcsname{\color{black}}%
      \expandafter\def\csname LT0\endcsname{\color[rgb]{1,0,0}}%
      \expandafter\def\csname LT1\endcsname{\color[rgb]{0,1,0}}%
      \expandafter\def\csname LT2\endcsname{\color[rgb]{0,0,1}}%
      \expandafter\def\csname LT3\endcsname{\color[rgb]{1,0,1}}%
      \expandafter\def\csname LT4\endcsname{\color[rgb]{0,1,1}}%
      \expandafter\def\csname LT5\endcsname{\color[rgb]{1,1,0}}%
      \expandafter\def\csname LT6\endcsname{\color[rgb]{0,0,0}}%
      \expandafter\def\csname LT7\endcsname{\color[rgb]{1,0.3,0}}%
      \expandafter\def\csname LT8\endcsname{\color[rgb]{0.5,0.5,0.5}}%
    \else
      \def\colorrgb#1{\color{black}}%
      \def\colorgray#1{\color[gray]{#1}}%
      \expandafter\def\csname LTw\endcsname{\color{white}}%
      \expandafter\def\csname LTb\endcsname{\color{black}}%
      \expandafter\def\csname LTa\endcsname{\color{black}}%
      \expandafter\def\csname LT0\endcsname{\color{black}}%
      \expandafter\def\csname LT1\endcsname{\color{black}}%
      \expandafter\def\csname LT2\endcsname{\color{black}}%
      \expandafter\def\csname LT3\endcsname{\color{black}}%
      \expandafter\def\csname LT4\endcsname{\color{black}}%
      \expandafter\def\csname LT5\endcsname{\color{black}}%
      \expandafter\def\csname LT6\endcsname{\color{black}}%
      \expandafter\def\csname LT7\endcsname{\color{black}}%
      \expandafter\def\csname LT8\endcsname{\color{black}}%
    \fi
  \fi
    \setlength{\unitlength}{0.0500bp}%
    \ifx\gptboxheight\undefined%
      \newlength{\gptboxheight}%
      \newlength{\gptboxwidth}%
      \newsavebox{\gptboxtext}%
    \fi%
    \setlength{\fboxrule}{0.5pt}%
    \setlength{\fboxsep}{1pt}%
\begin{picture}(5760.00,4320.00)%
    \gplgaddtomacro\gplbacktext{%
      \csname LTb\endcsname
      \put(949,726){\makebox(0,0){\strut{}-800}}%
      \csname LTb\endcsname
      \put(1321,659){\makebox(0,0){\strut{}-600}}%
      \csname LTb\endcsname
      \put(1693,591){\makebox(0,0){\strut{}-400}}%
      \csname LTb\endcsname
      \put(2065,523){\makebox(0,0){\strut{}-200}}%
      \csname LTb\endcsname
      \put(2438,456){\makebox(0,0){\strut{}0}}%
      \csname LTb\endcsname
      \put(2810,388){\makebox(0,0){\strut{}200}}%
      \csname LTb\endcsname
      \put(3181,320){\makebox(0,0){\strut{}400}}%
      \csname LTb\endcsname
      \put(3553,252){\makebox(0,0){\strut{}600}}%
      \csname LTb\endcsname
      \put(3925,185){\makebox(0,0){\strut{}800}}%
      \csname LTb\endcsname
      \put(4094,250){\makebox(0,0)[l]{\strut{}-800}}%
      \csname LTb\endcsname
      \put(4230,436){\makebox(0,0)[l]{\strut{}-600}}%
      \csname LTb\endcsname
      \put(4365,623){\makebox(0,0)[l]{\strut{}-400}}%
      \csname LTb\endcsname
      \put(4501,809){\makebox(0,0)[l]{\strut{}-200}}%
      \csname LTb\endcsname
      \put(4636,995){\makebox(0,0)[l]{\strut{}0}}%
      \csname LTb\endcsname
      \put(4771,1181){\makebox(0,0)[l]{\strut{}200}}%
      \csname LTb\endcsname
      \put(4907,1367){\makebox(0,0)[l]{\strut{}400}}%
      \csname LTb\endcsname
      \put(5042,1553){\makebox(0,0)[l]{\strut{}600}}%
      \csname LTb\endcsname
      \put(5178,1739){\makebox(0,0)[l]{\strut{}800}}%
      \put(868,830){\makebox(0,0)[r]{\strut{}0}}%
      \put(868,1001){\makebox(0,0)[r]{\strut{}100}}%
      \put(868,1173){\makebox(0,0)[r]{\strut{}200}}%
      \put(868,1344){\makebox(0,0)[r]{\strut{}300}}%
      \put(868,1516){\makebox(0,0)[r]{\strut{}400}}%
      \put(868,1687){\makebox(0,0)[r]{\strut{}500}}%
      \put(868,1859){\makebox(0,0)[r]{\strut{}600}}%
      \put(868,2030){\makebox(0,0)[r]{\strut{}700}}%
      \put(868,2201){\makebox(0,0)[r]{\strut{}800}}%
      \put(868,2372){\makebox(0,0)[r]{\strut{}900}}%
      \put(868,2544){\makebox(0,0)[r]{\strut{}1000}}%
      \put(3024,3876){\makebox(0,0){\strut{}}}%
    }%
    \gplgaddtomacro\gplfronttext{%
      \csname LTb\endcsname
      \put(2267,263){\makebox(0,0){\strut{}$x$ [mm]}}%
      \put(5104,925){\makebox(0,0){\strut{}$y$ [mm]}}%
      \put(70,1687){\makebox(0,0){\strut{}$z$ [mm]}}%
    }%
    \gplbacktext
    \put(0,0){\includegraphics{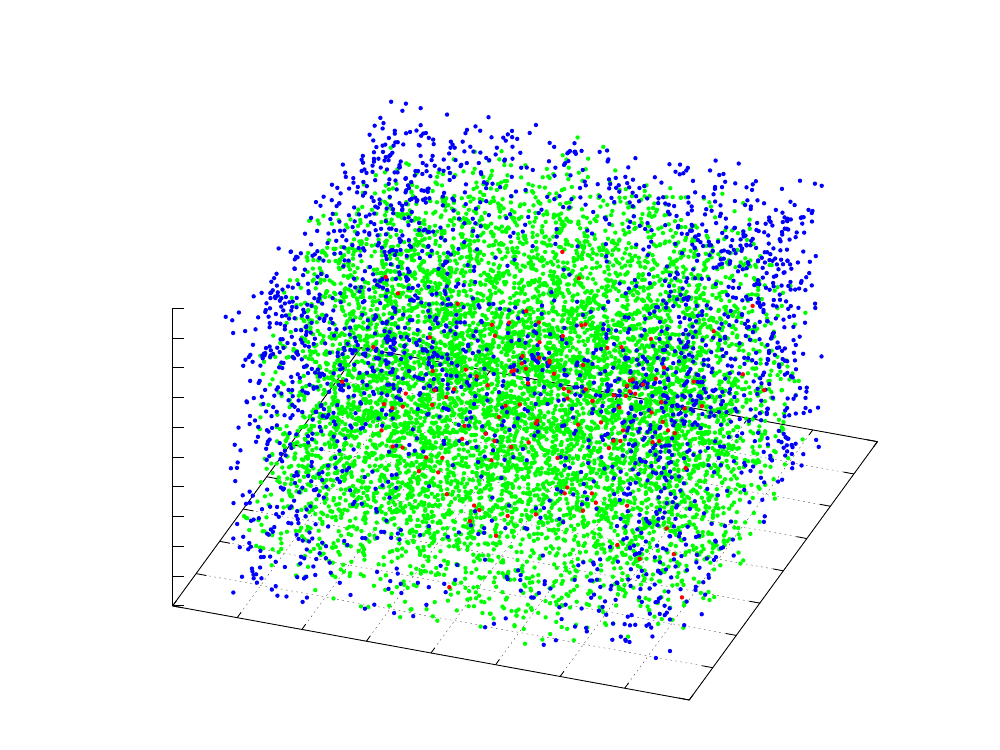}}%
    \gplfronttext
  \end{picture}%
\endgroup
  }
  \caption{(left) 7DOF serial manipulator (KUKA LBR IIWA), and (middle) its kinematic model~\cite{Kuhlemann2016}. (right) We can optimally solve its inverse kinematics (green) or find it infeasible (blue) in $99~\%$ of \num{10000} tested poses.}
  \labelfig{teaser}
\end{figure}
\subsection{Contribution}
\noindent Our main contributions are:

\noindent (1) We prove that the variety of IK solutions of all generic 7DOF revolute serial manipulators can be generated by second-degree polynomials only (Theorem~\ref{THM}). This considerably reduces the complexity of semi-algebraic optimization and makes it computationally feasible.

\noindent (2) We provide a method for computing a globally optimal solution to the IK problem for a general 7DOF serial manipulator with a polynomial objective function.

\noindent (3) We demonstrate that our approach works on a practical 7DOF KUKA LBR IIWA manipulator and allows us to solve $99~\%$ configurations while the straightforward semi-algebraic optimization fails in approx 34~\% of cases.

\noindent (4) We employ techniques from algebraic geometry~\cite{Cox-IVA-2015} and polynomial optimization~\cite{Lasserre2015} to solve the 7DOF IK problem exactly (within the numerical accuracy of computation). Our approach is also able to certify the in-feasibility of solving when it happens. 
%
\section{Previous work}
\noindent The first breakthrough in solving IK problems was the global solution to IK for a general 6DOF serial manipulator, which was given in~\cite{Raghaven:1991:KAM:112687.112715,DBLP:journals/trob/ManochaC94}. It leads to solving a polynomial system with 16 solutions. Another important result was the solution to the forward kinematics problem of the Stewart-Gough parallel manipulator platform~\cite{Lazard1993GeneralizedSP} leading to a polynomial system with 40 solutions. See recent work~\cite{Dai2019} for the review of local and other approximate techniques for solving IK problems. We next review only the most relevant work.
\subsection{The most relevant previous work}
The closest previous works are related to solving IK for mechanisms, which are under-constrained when considering positions of the final actuator only. The standard approach is to employ additional dynamics, time optimality, and collision constraints.  

In~\cite{Dai2014}, a technique for planning dynamic whole-body motion of a humanoid robot has been developed. It solves IK as a part of motion planning by local optimization methods taking into account kinematics, dynamics, and collision model. The planning method requires good initialization to converge, and depending on the quality of the initialization may take from minutes to hours of running time. Our approach provides a globally optimal solution for 7DOF kinematics subchains of more complex mechanisms and could be used to initialize kinematic part of motion planning. 

Work~\cite{Kuhlemann2016} presented IK solution for 7DOF manipulators with zero link offsets, e.g.\ KUKA LBR IIWA manipulators. The solution uses special kinematics of its class of manipulators to decompose the general IK problem to two simpler IK problems that can be solved in a closed-form. The one-dimensional variety of self-motions becomes circular, and hence the paper proposes to parameterize it by the angle of a point of the circle. Our approach generalizes this solution to a general 7DOF manipulator and shows that it is feasible to solve the IK problem for completely general 7DOF manipulators and optimize over their self-motion varieties. 

Paper~\cite{Dai2019} presents a global (approximate) solution to IK for 7DOF manipulators. It formulates the IK problem as a mixed-integer convex optimization program. The key idea of the paper is to approximate the non-convex space of rotations by piecewise linear functions on several intervals that partition the original space. This turns the original non-convex problem into an approximate convex problem when the right interval is chosen. Selecting the values of auxiliary binary variables to pick the actual interval of approximation leads to the integer part of the optimization. This is the first practical globally optimal approach, but it is only approximate and as such delivers solutions with errors in units of centimeters and units of degrees. It also fails to detect about $5~\%$ of infeasible poses. Our approach solves the original problem with sub-$10^{-4}$~mm and sub-$10^{-2}$~degree error and we can solve/decide the feasibility in all but $1~\%$ of tested cases. Computation times of~\cite{Dai2019} and our approach are roughly similar, in units of seconds. 
\section{Problem formulation}
\noindent Here we formulate the IK problem for the 7DOF serial manipulators as a semi-algebraic optimization problem with a polynomial objective function.

The task is to find joint coordinates of the manipulator in a way that the end-effector reaches the desired pose in space.
The IK problem is called under-constrained for manipulators, which have more DOF than they require to execute the given task.
In our case, to reach the desired pose in space, manipulators require to have six DOF, and therefore the IK problem for a 7DOF manipulator is under-constrained.
The consequence is that the IK problem has an infinite number of solutions for reachable generic end-effector poses for such manipulators.
This results in the self-motion property of these manipulators.
Self-motion is a motion of a manipulator, which is not observed in the task space, i.e.\ the end-effector pose of the manipulator is static while the links of the manipulator are moving.
Therefore, moving the manipulator along a path consisting of joint configurations of different solutions of the IK problem for the same pose in space will result in the self-motion of the manipulator.

The self-motion property provides the manipulator more adaptability since it allows, e.g.\ to avoid more obstacles in the paths and to avoid singularities, which leads to a more versatile mechanism.




On the other hand, increasing the degrees of freedom increases dramatically the difficulty of the IK problem computation.
The IK problem has no longer a finite number of solutions.
It can be formulated as a constrained optimization problem choosing the optimal solution from the set of all feasible solutions.
\subsection{Scope of the proposed method}
In this work, we present a general method for solving the IK problem for 7DOF serial manipulators.
We aim at a method that solves the IK problem, and that selects the globally optimal solution w.r.t.\ the given objective function from the infinite number of all feasible solutions.
It is naturally more time consuming to find the global solution than to find any solution, and therefore we do not expect our method to be an on-line method.
For on-line methods, such as used in the control units of the manipulators, the local methods are more suitable as they are fast and sufficiently accurate.

We see the application of our presented method in the developing process and the exploration of the capabilities of the manipulators.
The off-line method suits these tasks well as we are not typically limited by time.
Such a method can be used with an advantage when designing new 7DOF serial manipulators and optimizing their parameters, such as manipulability in regions of interest of the Cartesian space.
We see this as a reasonable approach as 7DOF serial manipulators are currently the most common redundant manipulators in the industry.

With regard to the presented scope, we next show how the IK problem for 7DOF serial manipulators can be modeled as a polynomial optimization problem (POP). 
\subsection{Description by forward kinematics}
\noindent We describe manipulators by Denavit-Hartenberg (D-H) convention~\cite{DH} to construct D-H transformation matrices $M_i(\theta_i)\in\R^{4\times4}$ from link $i$ to $i-1$. D-H matrices are parametrized by the joint angles $\theta_i$. The product of the D-H matrices for $i$ from $1$ to $7$ gives us the transformation matrix $M$, which represents the transformation from the end-effector coordinate system to the base coordinate system
\begin{align}
  \prod_{i=1}^7M_i(\theta_i) &= M.\labeleq{IKT:DKT}
\end{align}
The matrix $M$ consists of the position vector $t\in\R^3$ and the rotation matrix $R\in SO(3)$, which together represent the end-effector pose w.r.t.\ the base coordinate system.
When knowing the joint angles $\theta_i$, easy evaluation~\refeq{IKT:DKT} gives the end-effector pose in the base coordinate system.

Due to kinematics constraints, manipulators come with joint limits, i.e.\ with restrictions on the joint angles $\theta_i$. Typically, maximal $\theta_i^{High}$ and minimal $\theta_i^{Low}$ values of joint angles are given as
\begin{align}
  \theta_i^{Low} &\leq \theta_i \leq \theta_i^{High},\ i=1,\ldots,7.
\end{align}
\subsection{Inverse kinematics problem}\labelsec{IKT}
\noindent The forward kinematics problem is very easy to solve for serial manipulators.
On the other hand, the IK problem is much more difficult since it leads to solving systems of polynomial equations. To solve the IK problem we set up our desired pose of the end-effector in the form of matrix $M$ and then solve matrix \refeq{IKT:DKT} for the joint coordinates $\theta_i$.
For redundant manipulators, there is an infinite number of solutions, and therefore we introduce an objective function to select the solution on which the evaluation of the objective function is minimal.
In our case, we prefer the solutions that minimize the weighted sum of distances of the joint angles $\bm{\theta} = [\theta_1, \ldots, \theta_7]^\top$ from their preferred values $\bm{\hat{\theta}} = [\hat{\theta}_1, \ldots, \hat{\theta}_7]^\top$
\begin{align}
  \min_{\bm{\theta}\in\langle-\pi;\pi)^7} \sum_{i=1}^7 w_i\left((\theta_i - \hat{\theta}_i)\!\!\!\mod\pi\right),\labeleq{IKT:objectiveSOS}
\end{align}
where $w_i \geq 0$ and $\sum_{i=1}^7 w_i = 1$.
This objective function is widely used in the literature, e.g.\ \cite{pattacini2010experimental}.
In practice, the preferred values $\bm{\hat{\theta}}$ can be set to the previous configuration of the manipulator, and then the total movement of the actuators to reach the desired pose is minimized.

Next, we add joint limits to obtain the following optimization problem
\begin{align}
  \arraycolsep=1.4pt
  \begin{array}{lrcl@{\hskip0.5cm}l}
    \multicolumn{5}{l}{\displaystyle \min_{\bm{\theta}\in\langle-\pi;\pi)^7} \sum_{i=1}^7 w_i\left((\theta_i - \hat{\theta}_i)\!\!\!\mod\pi\right)} \\
    \text{s.t.} & \prod_{i=1}^7M_i(\theta_i) &=& M \\
    & \theta_i^{Low} \leq \theta_i &\leq& \theta_i^{High} & (i = 1,\ldots,7) \\
  \end{array}\labeleq{IKT:optimizationTask}
\end{align}

To be able to use techniques of polynomial optimization, we need to remove trigonometric functions that are contained in \refeq{IKT:DKT}. We do that by introducing new variables $\bm{c} = [c_1, \ldots, c_7]^\top$ and $\bm{s} = [s_1, \ldots, s_7]^\top$, which represent the cosines and sines of the joint angles $\bm{\theta} = [\theta_1, \ldots, \theta_7]^\top$ respectively. Then, we can rewrite Problem~\refeqb{IKT:optimizationTask} in the new variables. In order to preserve the structure, we need to add the trigonometric identities
\begin{align}
  q_i(\bm{c}, \bm{s}) &= c_i^2 + s_i^2 -1 = 0,\ i=1,\ldots,7. \labeleq{IKT:q}
\end{align}
Matrix \refeq{IKT:DKT} contains 12 trigonometric equations and can be directly rewritten as 12 polynomial equations of degrees up to seven in the newly introduced variables. However, we use the following clever manipulation with the matrix multiplication, which relies on the fact that the inverse of a rotation matrix is its transpose, i.e.\ it is a linear function of the original matrices,
\begin{align}
  \prod_{i=3}^5M_i(\theta_i) - M_2^{-1}(\theta_2)M_1^{-1}(\theta_1)MM_7^{-1}(\theta_7)M_6^{-1}(\theta_6) &= 0.\labeleq{IKT:DKT4}
\end{align}
It reduces the maximal degree of the polynomials in unknowns $\bm{c}$ and $\bm{s}$ to four.
We denote these polynomials in \refeq{IKT:DKT4} as
\begin{align}
  p_j(\bm{c}, \bm{s}) &= 0,\ j = 1,\ldots, 12 \labeleq{IKT:p}
\end{align}
The next step is to change objective~\refeqb{IKT:objectiveSOS} into a polynomial in the new variables $\bm{c}, \bm{s}$. We notice that the objective function~\refeqb{IKT:objectiveSOS} is minimal on the same solutions as the following objective function
\begin{align}
  & \min_{\bm{c}\in\langle-1,1\rangle^{7},\ \bm{s}\in\langle-1,1\rangle^7} \sum_{i=1}^7 w_i\Big((c_i - \cos\hat{\theta}_i)^2 + (s_i - \sin\hat{\theta}_i)^2\Big)\\
  &= \min_{\bm{c}\in\langle-1,1\rangle^{7},\ \bm{s}\in\langle-1,1\rangle^7} \sum_{i=1}^7 2w_i(1 - c_i\cos\hat{\theta}_i - s_i\sin\hat{\theta}_i)\labeleq{IKT:objectivePOP}.
\end{align}
After rewriting the joint limits inequalities into the polynomial form, we obtain the following polynomial optimization problem
\begin{align}
  \arraycolsep=1.4pt
  \begin{array}{lrcl@{\hskip0.5cm}l}
    \multicolumn{5}{l}{\displaystyle \min_{\bm{c}\in\langle-1,1\rangle^{7},\ \bm{s}\in\langle-1,1\rangle^7} \sum_{i=1}^7 2w_i(1 - c_i\cos\hat{\theta}_i - s_i\sin\hat{\theta}_i)} \\
    \text{s.t.} & p_j(\bm{c}, \bm{s}) &=& 0 & (j = 1,\ldots, 12) \\
    & q_i(\bm{c}, \bm{s}) &=& 0 & (i = 1,\ldots,7) \\
    & -(c_i+1)\tan\frac{\theta_i^{Low}}{2}+s_i &\geq&0 & (i = 1,\ldots,7) \\
    & (c_i+1)\tan\frac{\theta_i^{High}}{2}-s_i &\geq&0 & (i = 1,\ldots,7)
  \end{array}\labeleq{IKT:deg4}
\end{align}
We show how this polynomial optimization problem can be solved in the following sections.

Since the presented framework is general, any objective function can be chosen as long as it can be expressed as a low degree polynomial in sines and cosines of the joint angles.
Different objective functions will be chosen for different tasks, but we demonstrate the presented approach with the objective function \refeqb{IKT:objectivePOP}.

After solving Problem~\refeqb{IKT:deg4}, we recover $\bm{\theta}$ from $\bm{c}$ and $\bm{s}$ by function atan2, which takes into account signs of the arguments.
\section{Polynomial optimization}\labelsec{POP}
\noindent Next, we describe the polynomial optimization methods we use to solve Problem~\refeqb{IKT:deg4}.

Polynomial optimization problems (POPs) are generally non-convex, but they can be solved with global optimality certificates with the help of convex optimization, as surveyed in~\cite{Lasserre2015}. The idea consists of building a hierarchy of convex optimization problems of increasing size whose values converge to the value of the POP. The convergence proof is based on results of real algebraic geometry, namely the representation of positive polynomials, or Positivstellensatz (PSatz for short). One of the most popular Psatz is due to Putinar, and it expresses a polynomial positive on a compact basic semialgebraic set as a weighted sum of squares (SOS). Finding this SOS representation amounts to solving a semidefinite programming (SDP) problem, a particular convex optimization problem that can be solved efficiently numerically with interior point algorithms. By increasing the degree of the SOS representation, we increase the size of the SDP problem, thereby constructing a hierarchy of SDP problems. Dual to this polynomial positivity problem is the problem of characterizing moments of measures supported on a compact basic semialgebraic set. This also admits an SDP formulation, called moment relaxations, yielding a dual hierarchy, indexed by the so-called relaxation order. The primal-dual hierarchy is called the moment-SOS hierarchy or also the Lasserre hierarchy since it was first proposed in~\cite{Lasserre2001} in the context of POP with convergence and duality proofs. When the relaxation order increases, the Lasserre hierarchy generates a monotone sequence of superoptimal bounds on the global optimum of a given POP, and results on the moment problems can be used to certify exactness of a given bound, at a finite relaxation order. In this case, it is not necessary to go further in the hierarchy: the non-convex POP is solved at the price of solving a convex SDP problem of a given size. A Matlab package GloptiPoly \cite{Henrion2003} has been designed to construct the SDP problems in the hierarchy and solve them with a general-purpose SDP solver.

As observed in many applications, the main limitation of the Lasserre hierarchy (in its original form) is its poor scalability as a function of the number of variables and degree of the POP. This is balanced by the practical observation that, very often, global optimality is certified at the second or third-order relaxation. As our experiments reveal, for the degree 4 POP studied in our paper, the third order relaxation is out of reach of state-of-the-art SDP solvers. It becomes hence critical to investigate reformulation techniques to reduce the degree as much as possible. This is the topic of the next section.
\section{Symbolic reduction of the POP}
\noindent Here we provide the description of the algebraic geometry technique we use to reduce the degree of our POP problem to obtain a practical solving method. See~\cite{Cox-IVA-2015} for algebraic-geometric notation and concepts.

The POP we have at hand is a constrained with polynomial equations
\begin{align}
f_1=\cdots=f_s=0
\end{align}
of degree $4$ in $\mathbb{Q}[x_1, \ldots, x_n]$. Observe that one can replace
these polynomial equations in the formulation of the POP by any other set of
polynomial equations
\begin{align}
g_1=\cdots=g_t=0
\end{align}
as long as both systems of equations have the same solution set. Natural
candidates for the $g_i$'s are to pick them in the {\em ideal} generated by
$(f_1, \ldots, f_s)$, i.e. the set of algebraic combinations $I = \{\sum_i q_i
f_i\mid q_i \in \mathbb{Q}[x_1, \ldots, x_n]\}$. It is clear that if all the
$f_i$'s vanish simultaneously at a point, any polynomial $g$ in this set will
vanish at this point.

The difficulty is how to understand the structure of this set and find a nice
finite representation of it that would allow many algebraic operations (such as
deciding whether a given polynomial lies in this set). Solutions have been
brought by symbolic computation, aka computer algebra, through the development
of algorithms computing {\em Gr\"obner bases}, which were introduced by
Buchberger~\cite{Cox-IVA-2015}. These are finite sets, depending on a monomial ordering~\cite{Cox-IVA-2015}, which generate $I$ as input equations do, but from which
the whole structure of $I$ can be read.

Modern algorithms for computing Gr\"obner bases ($F4$ and $F5$ algorithms),
which significantly improved by several orders of magnitude the
state-of-the-art, were introduced next by J.~C.~Faug\`ere~\cite{F4, F5}. These
latter algorithms bring a linear algebra approach to Gr\"obner bases
computations. In particular, noticing that the intersection of $I$ with the
subset of polynomials in $\mathbb{Q}[x_1, \ldots, x_n]$ of degree $\leq d$ is a
vector space of finite dimension, is a key to reduce Gr\"obner bases
computations to exact linear algebra operations.

Hence, Gr\"obner bases provide bases of such vector spaces when one uses
monomial orderings which filter monomials w.r.t.\ degree first. Finally, going
back to our problem, a Gr\"obner basis computation allows us to discover if $I$
contains degree $2$ polynomials (and is generated by such quadrics).

While this is never the case when starting with generic degree $4$, observe that
there are many relations between the coefficients of the degree $4$ equations of
our POP. Hence, we are not facing a generic situation there and we'll see
further that actually a Gr\"obner basis computation provides a set of quadrics
that can replace our initial set of constraints. Note also that since Gr\"obner
basis algorithms rely on exact linear algebra, such a property holds for any
instance of our POP if it holds for a randomly chosen one (the trace of the
computation will always be the same, giving rise to polynomials of degree $\leq
2$).
\section{Solving the IK problem}
\noindent In order to solve the IK problem, we need to solve the optimization problem \refeqb{IKT:deg4}. First, we apply the implementation GloptiPoly \cite{Gloptipoly} of the method described in \refsec{POP} directly on the Problem \refeqb{IKT:deg4}.
\subsection{Direct application of polynomial solver}
\noindent Since the original Problem \refeqb{IKT:deg4} contains polynomials of degrees up to 4, we start with the first relaxation of order two. That means we substitute each monomial in the original 14 variables up to degree four by a new variable, and therefore the resulting SDP program will have \num{3060} variables.

Solving the first relaxation typically does not yield the solution for this parametrization of the problem, and therefore it is required to go higher in the relaxation hierarchy. Unfortunately, relaxation order three for a polynomial problem in 14 variables leads to an SDP problem in \num{38760} variables. Such a huge problem is still often solvable on contemporary computers, but it often takes hours to finish.
\subsection{Symbolic reduction}\labelsec{sym}
\noindent In the view of the previous paragraph, we aim at simplifying the original polynomial problem to be able to obtain solutions even for the relaxation of order two, which takes seconds to solve.

Here is our main result that allows us to do it. We claim that polynomials $p_j$ and $q_i$ of degrees up to four in Problem \refeqb{IKT:deg4} can be reduced to polynomials of degree two.
\begin{theorem}\label{THM}
  The ideal generated by the kinematics constraints \refeqb{IKT:p} for a generic serial manipulator with seven revolute joints and for generic pose $M$ with the addition of the trigonometric identities \refeqb{IKT:q} can be generated by a set of degree two polynomials.
\end{theorem}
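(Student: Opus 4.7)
The plan is to establish the theorem by an explicit Gr\"obner basis computation on a random (hence generic) instance, combined with the standard trace argument. Let $I \subset \mathbb{Q}[c_1,s_1,\ldots,c_7,s_7]$ denote the ideal $\langle p_1,\ldots,p_{12},q_1,\ldots,q_7\rangle$. The goal is to exhibit a generating set of $I$ whose elements all have total degree at most $2$, which is equivalent to showing that the reduced Gr\"obner basis of $I$ with respect to a graded monomial order (such as \texttt{grevlex}) consists solely of quadratic polynomials.

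First I would exploit the structural content of the manipulation \refeqb{IKT:DKT4} that produced the $p_j$'s. Each D-H matrix $M_i(\theta_i)$ is affine in $(c_i,s_i)$, so the left-hand side $\prod_{i=3}^5 M_i(\theta_i)$ has total degree $3$ and involves only $c_3,s_3,c_4,s_4,c_5,s_5$, while the right-hand side $M_2^{-1}M_1^{-1}MM_7^{-1}M_6^{-1}$ has total degree $4$ and is multilinear in the four pairs $(c_1,s_1),(c_2,s_2),(c_6,s_6),(c_7,s_7)$. Consequently the degree-$4$ homogeneous component of every $p_j$ depends only on the $8$ variables attached to joints $1,2,6,7$ and is linear in each of the four relevant pairs. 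This asymmetry is the source of the non-generic collapse: the quartic leading parts of the $p_j$'s live in a very small subspace of $\mathbb{Q}[\bm{c},\bm{s}]_4$, which the Pythagorean quadrics $q_i$ and their products can absorb.

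Second, I would run the $F_4$/$F_5$ algorithm on a randomly chosen rational instance of the D-H parameters and of $M$. Concretely, at degree $d$, form the Macaulay-type matrix whose rows are the products of the $p_j$'s and $q_i$'s with all monomials making the total degree $\leq d$ and whose columns index all monomials up to degree $d$. Gaussian-reduce this matrix successively for $d=2,3,4$, and verify that every leading monomial that appears is already divisible by some leading monomial obtained at $d=2$. This shows that $I = \langle I \cap \mathbb{Q}[\bm{c},\bm{s}]_{\leq 2} \rangle$ for that specific instance.

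Finally, the trace argument closes the proof: the $F_4$/$F_5$ computation is purely linear algebra whose pivot pattern is stable on a Zariski-open subset of the coefficient space (the D-H parameters together with the entries of $M$). Hence if the computation produces only quadratic generators on one rational sample, it does so on an open dense set of parameters, i.e.\ for generic manipulators and generic target poses. The main obstacle --- and the essential non-trivial step --- is the algorithmic verification in the second step: one must actually demonstrate that the quartic $p_j$'s, when combined with the $q_i$'s and with multiples of putative degree-$2$ ideal members, really do reduce away. No purely Hilbert-series count forces this outcome, and for a random ideal with the same degree sequence the reduction would fail; its success here is a genuine feature of the special coefficient relations imposed by the kinematic geometry.
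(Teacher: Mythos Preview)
Your proposal is correct and follows essentially the same strategy as the paper: compute a Gr\"obner basis on a random (hence generic) rational instance, verify that the degree-two elements already generate the ideal, and invoke the trace/Zariski-open argument to transfer the conclusion to all generic parameters. The paper's verification step is phrased slightly differently---it computes the reduced basis $G$, extracts the degree-two subset $S$, recomputes a reduced basis $G'$ from $S$, and checks $G=G'$---and it omits the structural motivation you supply in your first step, but the underlying idea is identical.
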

\begin{proof}
  The proof is computational.
  We generate generic instances of serial manipulators and generic poses.
  Then a Gr\"obner basis $G$ \cite{Cox1996} of polynomials $p_j$ and $q_i$ is computed for each instance of the manipulator and pose.
  We select a subset $S$ of degree two polynomials from the basis $G$, and by computing a new Gr\"obner basis $G^\prime$ from $S$, we verify that $S$ generates the same ideal as the original set of polynomials.
  See Maple code in~\reflis{maple}.
  The polynomials $p_j$ and $q_i$ are put into the variable \texttt{eq}, and the last command of the code will be evaluated to \texttt{True} if the bases $G$ and $G^\prime$ are equal, and therefore generate the same ideal.
\begin{lstlisting}[language=maple, caption={Maple code for the proof of Theorem~\ref{THM}.}, labellis={maple}]
# compute the reduced Groebner basis from pj and qi polynomials (in variables of eq)
G := Basis(eq, tdeg(op(indets(eq)))):
# select degree two polynomials from the basis and compute a new reduced Groebner basis
idxDegTwo := SearchAll(2, map(degree, G)):
eqPrime := G[[idxDegTwo]]:
GPrime := Basis(eqPrime, tdeg(op(indets(eq)))):
# compare the two bases
evalb(G = GPrime);
@@\textit{\color{blue}{\hfill True\hfill}}
\end{lstlisting}
\end{proof}
\newcommand{\GBFailed}{1.2 \%}
\newcommand{\GBZeroErrorsT}{0}
\newcommand{\GBZeroErrorsR}{0}
\newcommand{\GBErrorMeanT}{\num{7.27E-05}}
\newcommand{\GBErrorMeanR}{\num{5.59E-03}}
\newcommand{\GBTimeMeanGloptipoly}{\num{ 5.6}}
\newcommand{\GBTimeMeanGB}{\num{ 2.7}}
\newcommand{\gloptipolyFailed}{32.4 \%}
\newcommand{\gloptipolyZeroErrorsT}{0}
\newcommand{\gloptipolyZeroErrorsR}{509}
\newcommand{\gloptipolyErrorMeanT}{\num{3.92E-04}}
\newcommand{\gloptipolyErrorMeanR}{\num{6.11E-05}}
\newcommand{\gloptipolyTimeMeanGloptipoly}{\num{21.3}}
\begin{figure}[t]
  \centering
  \resizebox{0.8\textwidth}{!}{
\begingroup
  \makeatletter
  \gdef\gplbacktext{}%
  \gdef\gplfronttext{}%
  \makeatother
  \ifGPblacktext
    \def\colorrgb#1{}%
    \def\colorgray#1{}%
  \else
    \ifGPcolor
      \def\colorrgb#1{\color[rgb]{#1}}%
      \def\colorgray#1{\color[gray]{#1}}%
      \expandafter\def\csname LTw\endcsname{\color{white}}%
      \expandafter\def\csname LTb\endcsname{\color{black}}%
      \expandafter\def\csname LTa\endcsname{\color{black}}%
      \expandafter\def\csname LT0\endcsname{\color[rgb]{1,0,0}}%
      \expandafter\def\csname LT1\endcsname{\color[rgb]{0,1,0}}%
      \expandafter\def\csname LT2\endcsname{\color[rgb]{0,0,1}}%
      \expandafter\def\csname LT3\endcsname{\color[rgb]{1,0,1}}%
      \expandafter\def\csname LT4\endcsname{\color[rgb]{0,1,1}}%
      \expandafter\def\csname LT5\endcsname{\color[rgb]{1,1,0}}%
      \expandafter\def\csname LT6\endcsname{\color[rgb]{0,0,0}}%
      \expandafter\def\csname LT7\endcsname{\color[rgb]{1,0.3,0}}%
      \expandafter\def\csname LT8\endcsname{\color[rgb]{0.5,0.5,0.5}}%
    \else
      \def\colorrgb#1{\color{black}}%
      \def\colorgray#1{\color[gray]{#1}}%
      \expandafter\def\csname LTw\endcsname{\color{white}}%
      \expandafter\def\csname LTb\endcsname{\color{black}}%
      \expandafter\def\csname LTa\endcsname{\color{black}}%
      \expandafter\def\csname LT0\endcsname{\color{black}}%
      \expandafter\def\csname LT1\endcsname{\color{black}}%
      \expandafter\def\csname LT2\endcsname{\color{black}}%
      \expandafter\def\csname LT3\endcsname{\color{black}}%
      \expandafter\def\csname LT4\endcsname{\color{black}}%
      \expandafter\def\csname LT5\endcsname{\color{black}}%
      \expandafter\def\csname LT6\endcsname{\color{black}}%
      \expandafter\def\csname LT7\endcsname{\color{black}}%
      \expandafter\def\csname LT8\endcsname{\color{black}}%
    \fi
  \fi
    \setlength{\unitlength}{0.0500bp}%
    \ifx\gptboxheight\undefined%
      \newlength{\gptboxheight}%
      \newlength{\gptboxwidth}%
      \newsavebox{\gptboxtext}%
    \fi%
    \setlength{\fboxrule}{0.5pt}%
    \setlength{\fboxsep}{1pt}%
\begin{picture}(5760.00,4320.00)%
    \gplgaddtomacro\gplbacktext{%
      \csname LTb\endcsname
      \put(949,726){\makebox(0,0){\strut{}-800}}%
      \csname LTb\endcsname
      \put(1321,659){\makebox(0,0){\strut{}-600}}%
      \csname LTb\endcsname
      \put(1693,591){\makebox(0,0){\strut{}-400}}%
      \csname LTb\endcsname
      \put(2065,523){\makebox(0,0){\strut{}-200}}%
      \csname LTb\endcsname
      \put(2438,456){\makebox(0,0){\strut{}0}}%
      \csname LTb\endcsname
      \put(2810,388){\makebox(0,0){\strut{}200}}%
      \csname LTb\endcsname
      \put(3181,320){\makebox(0,0){\strut{}400}}%
      \csname LTb\endcsname
      \put(3553,252){\makebox(0,0){\strut{}600}}%
      \csname LTb\endcsname
      \put(3925,185){\makebox(0,0){\strut{}800}}%
      \csname LTb\endcsname
      \put(4094,250){\makebox(0,0)[l]{\strut{}-800}}%
      \csname LTb\endcsname
      \put(4230,436){\makebox(0,0)[l]{\strut{}-600}}%
      \csname LTb\endcsname
      \put(4365,623){\makebox(0,0)[l]{\strut{}-400}}%
      \csname LTb\endcsname
      \put(4501,809){\makebox(0,0)[l]{\strut{}-200}}%
      \csname LTb\endcsname
      \put(4636,995){\makebox(0,0)[l]{\strut{}0}}%
      \csname LTb\endcsname
      \put(4771,1181){\makebox(0,0)[l]{\strut{}200}}%
      \csname LTb\endcsname
      \put(4907,1367){\makebox(0,0)[l]{\strut{}400}}%
      \csname LTb\endcsname
      \put(5042,1553){\makebox(0,0)[l]{\strut{}600}}%
      \csname LTb\endcsname
      \put(5178,1739){\makebox(0,0)[l]{\strut{}800}}%
      \put(868,830){\makebox(0,0)[r]{\strut{}0}}%
      \put(868,1001){\makebox(0,0)[r]{\strut{}100}}%
      \put(868,1173){\makebox(0,0)[r]{\strut{}200}}%
      \put(868,1344){\makebox(0,0)[r]{\strut{}300}}%
      \put(868,1516){\makebox(0,0)[r]{\strut{}400}}%
      \put(868,1687){\makebox(0,0)[r]{\strut{}500}}%
      \put(868,1859){\makebox(0,0)[r]{\strut{}600}}%
      \put(868,2030){\makebox(0,0)[r]{\strut{}700}}%
      \put(868,2201){\makebox(0,0)[r]{\strut{}800}}%
      \put(868,2372){\makebox(0,0)[r]{\strut{}900}}%
      \put(868,2544){\makebox(0,0)[r]{\strut{}1000}}%
      \put(3024,3876){\makebox(0,0){\strut{}}}%
    }%
    \gplgaddtomacro\gplfronttext{%
      \csname LTb\endcsname
      \put(2267,263){\makebox(0,0){\strut{}$x$ [mm]}}%
      \put(5104,925){\makebox(0,0){\strut{}$y$ [mm]}}%
      \put(70,1687){\makebox(0,0){\strut{}$z$ [mm]}}%
    }%
    \gplbacktext
    \put(0,0){\includegraphics{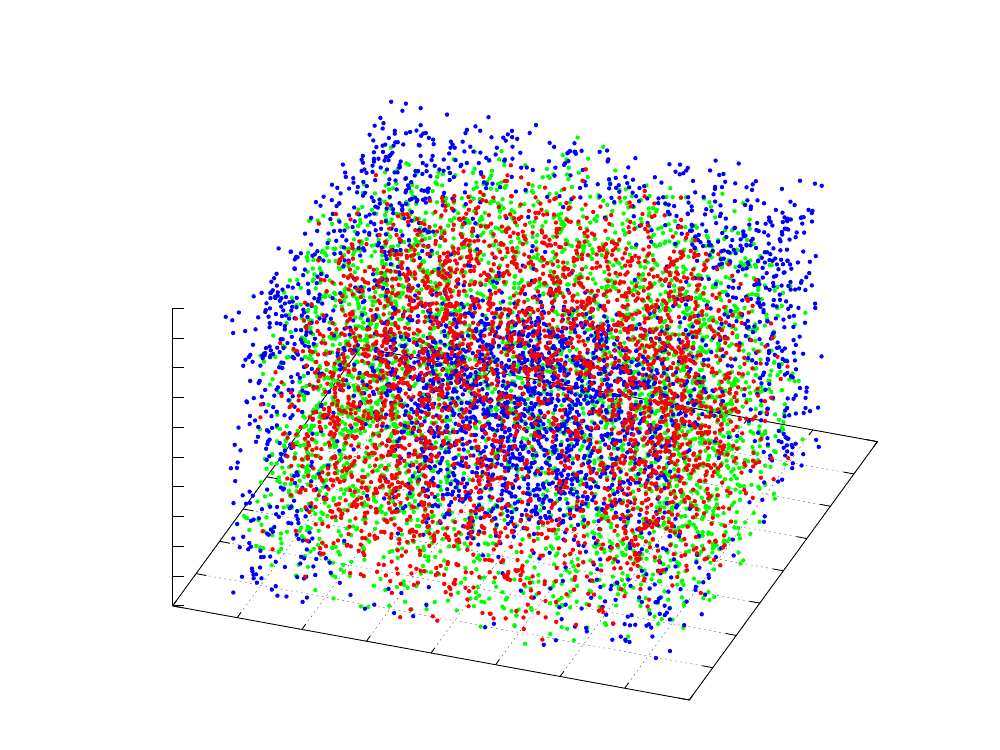}}%
    \gplfronttext
  \end{picture}%
\endgroup
  }
  \caption{Generated poses of the manipulator. Green dots are poses marked as feasible by direct solving with GloptiPoly, blue as infeasible, and for the red ones the computation failed (\gloptipolyFailed).}
  \labelfig{Ill:gloptipoly:3D}
\end{figure}
\subsection{Solving the reduced polynomial optimization problem}
\noindent We exploit Theorem \ref{THM} in our approach to solve the IK problem. First, we compute a Gr\"obner basis of the kinematics constraints \refeqb{IKT:p} and \refeqb{IKT:q} from which we select only polynomials of degree two. Then, we construct the Problem \refeqb{IKT:deg4} but with polynomial constraints given by the degree two polynomials only.
We solve the problem by hierarchies of semidefinite programs.

Reducing the degree of polynomials from four to two allows us to start with SDP relaxation of order one.
The size of this SDP problem, in terms of the number of variables, is now 120.
Practical experiments have shown that the first relaxation is not tight enough to yield the solution.
On the other hand, the second relaxation gives a solution for almost all poses, see \reftab{ill:results}.
\begin{figure}[t]
  \centering
  \resizebox{0.8\textwidth}{!}{
\begingroup
  \makeatletter
  \gdef\gplbacktext{}%
  \gdef\gplfronttext{}%
  \makeatother
  \ifGPblacktext
    \def\colorrgb#1{}%
    \def\colorgray#1{}%
  \else
    \ifGPcolor
      \def\colorrgb#1{\color[rgb]{#1}}%
      \def\colorgray#1{\color[gray]{#1}}%
      \expandafter\def\csname LTw\endcsname{\color{white}}%
      \expandafter\def\csname LTb\endcsname{\color{black}}%
      \expandafter\def\csname LTa\endcsname{\color{black}}%
      \expandafter\def\csname LT0\endcsname{\color[rgb]{1,0,0}}%
      \expandafter\def\csname LT1\endcsname{\color[rgb]{0,1,0}}%
      \expandafter\def\csname LT2\endcsname{\color[rgb]{0,0,1}}%
      \expandafter\def\csname LT3\endcsname{\color[rgb]{1,0,1}}%
      \expandafter\def\csname LT4\endcsname{\color[rgb]{0,1,1}}%
      \expandafter\def\csname LT5\endcsname{\color[rgb]{1,1,0}}%
      \expandafter\def\csname LT6\endcsname{\color[rgb]{0,0,0}}%
      \expandafter\def\csname LT7\endcsname{\color[rgb]{1,0.3,0}}%
      \expandafter\def\csname LT8\endcsname{\color[rgb]{0.5,0.5,0.5}}%
    \else
      \def\colorrgb#1{\color{black}}%
      \def\colorgray#1{\color[gray]{#1}}%
      \expandafter\def\csname LTw\endcsname{\color{white}}%
      \expandafter\def\csname LTb\endcsname{\color{black}}%
      \expandafter\def\csname LTa\endcsname{\color{black}}%
      \expandafter\def\csname LT0\endcsname{\color{black}}%
      \expandafter\def\csname LT1\endcsname{\color{black}}%
      \expandafter\def\csname LT2\endcsname{\color{black}}%
      \expandafter\def\csname LT3\endcsname{\color{black}}%
      \expandafter\def\csname LT4\endcsname{\color{black}}%
      \expandafter\def\csname LT5\endcsname{\color{black}}%
      \expandafter\def\csname LT6\endcsname{\color{black}}%
      \expandafter\def\csname LT7\endcsname{\color{black}}%
      \expandafter\def\csname LT8\endcsname{\color{black}}%
    \fi
  \fi
    \setlength{\unitlength}{0.0500bp}%
    \ifx\gptboxheight\undefined%
      \newlength{\gptboxheight}%
      \newlength{\gptboxwidth}%
      \newsavebox{\gptboxtext}%
    \fi%
    \setlength{\fboxrule}{0.5pt}%
    \setlength{\fboxsep}{1pt}%
\begin{picture}(5760.00,4320.00)%
    \gplgaddtomacro\gplbacktext{%
      \csname LTb\endcsname
      \put(949,726){\makebox(0,0){\strut{}-800}}%
      \csname LTb\endcsname
      \put(1321,659){\makebox(0,0){\strut{}-600}}%
      \csname LTb\endcsname
      \put(1693,591){\makebox(0,0){\strut{}-400}}%
      \csname LTb\endcsname
      \put(2065,523){\makebox(0,0){\strut{}-200}}%
      \csname LTb\endcsname
      \put(2438,456){\makebox(0,0){\strut{}0}}%
      \csname LTb\endcsname
      \put(2810,388){\makebox(0,0){\strut{}200}}%
      \csname LTb\endcsname
      \put(3181,320){\makebox(0,0){\strut{}400}}%
      \csname LTb\endcsname
      \put(3553,252){\makebox(0,0){\strut{}600}}%
      \csname LTb\endcsname
      \put(3925,185){\makebox(0,0){\strut{}800}}%
      \csname LTb\endcsname
      \put(4094,250){\makebox(0,0)[l]{\strut{}-800}}%
      \csname LTb\endcsname
      \put(4230,436){\makebox(0,0)[l]{\strut{}-600}}%
      \csname LTb\endcsname
      \put(4365,623){\makebox(0,0)[l]{\strut{}-400}}%
      \csname LTb\endcsname
      \put(4501,809){\makebox(0,0)[l]{\strut{}-200}}%
      \csname LTb\endcsname
      \put(4636,995){\makebox(0,0)[l]{\strut{}0}}%
      \csname LTb\endcsname
      \put(4771,1181){\makebox(0,0)[l]{\strut{}200}}%
      \csname LTb\endcsname
      \put(4907,1367){\makebox(0,0)[l]{\strut{}400}}%
      \csname LTb\endcsname
      \put(5042,1553){\makebox(0,0)[l]{\strut{}600}}%
      \csname LTb\endcsname
      \put(5178,1739){\makebox(0,0)[l]{\strut{}800}}%
      \put(868,830){\makebox(0,0)[r]{\strut{}0}}%
      \put(868,1001){\makebox(0,0)[r]{\strut{}100}}%
      \put(868,1173){\makebox(0,0)[r]{\strut{}200}}%
      \put(868,1344){\makebox(0,0)[r]{\strut{}300}}%
      \put(868,1516){\makebox(0,0)[r]{\strut{}400}}%
      \put(868,1687){\makebox(0,0)[r]{\strut{}500}}%
      \put(868,1859){\makebox(0,0)[r]{\strut{}600}}%
      \put(868,2030){\makebox(0,0)[r]{\strut{}700}}%
      \put(868,2201){\makebox(0,0)[r]{\strut{}800}}%
      \put(868,2372){\makebox(0,0)[r]{\strut{}900}}%
      \put(868,2544){\makebox(0,0)[r]{\strut{}1000}}%
      \put(3024,3876){\makebox(0,0){\strut{}}}%
    }%
    \gplgaddtomacro\gplfronttext{%
      \csname LTb\endcsname
      \put(2267,263){\makebox(0,0){\strut{}$x$ [mm]}}%
      \put(5104,925){\makebox(0,0){\strut{}$y$ [mm]}}%
      \put(70,1687){\makebox(0,0){\strut{}$z$ [mm]}}%
    }%
    \gplbacktext
    \put(0,0){\includegraphics{3D_GB}}%
    \gplfronttext
  \end{picture}%
\endgroup
  }
  \caption{Generated poses of the manipulator. Green dots are poses marked as feasible by GloptiPoly after symbolic simplification, blue as infeasible, and for the red ones the computation failed (\GBFailed).}
  \labelfig{Ill:GB:3D}
\end{figure}
\section{Experiments}
\begin{table*}
  \centering
  \caption{Overview of execution times and accuracy of the presented methods.}
  \labeltab{ill:results}
  \resizebox{\textwidth}{!}{
  \begin{tabular}{cccccc}\hline
    & \multicolumn{2}{c}{\textbf{Execution time} [s]} & \multicolumn{2}{c}{\textbf{Median error}} & \textbf{\% of failed}\\
    & \textbf{Reduction step} & \textbf{GloptiPoly} & \textbf{Tran.} [mm] & \textbf{Rot.} [deg] & \textbf{poses}\\\hline
    Deg. 4 pol. & ---           & \gloptipolyTimeMeanGloptipoly& \gloptipolyErrorMeanT & \gloptipolyErrorMeanR & \gloptipolyFailed \\
    Deg. 2 pol. & \GBTimeMeanGB & \GBTimeMeanGloptipoly        & \GBErrorMeanT         & \GBErrorMeanR         & \GBFailed         \\\hline
  \end{tabular}
  }
\end{table*}

\noindent We demonstrate our method on IK problem for \textit{KUKA LBR IIWA} arm with seven revolute joints.
The structure of the manipulator is designed in a special way such that the IK problem is simple to compute.
One of the advantages is that for a fixed end-effector pose, the joint angle $\theta_4$ is constant within the self-motion.
This allows for a geometrical derivation of a closed-form solution to the IK problem, such a \cite{Kuhlemann2016}, where the authors introduce new angle parameter $\delta$ that fixes the left DOF of the IK problem.

Another approach is to solve the problem by local non-linear optimization techniques \cite{Buss2004}, but such methods do not provide global optima, and the found solution is highly dependent on the initial guess.

Solving the IK problem globally is more computationally challenging.
To be able to tackle the problem in a matter of seconds, relaxations of the problem were developed in the past.
Dai et al.\ in \cite{Dai2019} proposed mix-integer convex relaxation of the non-convex rotational constraints.
Their method finds all classes of solutions that are in correspondence with a different set of active binary variables, but they are unable to select a global optima w.r.t.\ an objective function.
\subsection{Polynomial optimization problem for KUKA LBR IIWA}
\noindent We directly parameterize Problem \refeqb{IKT:deg4} by the D-H parameters of the \textit{KUKA LBR IIWA} manipulator.
We set the weights equally to $w_i = \frac{1}{7}$, and we set the preferred values of $\hat{\theta}_i$ to zero, which is in the middle of the joint allowed interval.
This leads to POP in 14 variables and with polynomials $p_j$ of degrees up to four.
\begin{figure}[t]
  \centering
  \resizebox{0.8\textwidth}{!}{
\begingroup
  \makeatletter
  \gdef\gplbacktext{}%
  \gdef\gplfronttext{}%
  \makeatother
  \ifGPblacktext
    \def\colorrgb#1{}%
    \def\colorgray#1{}%
  \else
    \ifGPcolor
      \def\colorrgb#1{\color[rgb]{#1}}%
      \def\colorgray#1{\color[gray]{#1}}%
      \expandafter\def\csname LTw\endcsname{\color{white}}%
      \expandafter\def\csname LTb\endcsname{\color{black}}%
      \expandafter\def\csname LTa\endcsname{\color{black}}%
      \expandafter\def\csname LT0\endcsname{\color[rgb]{1,0,0}}%
      \expandafter\def\csname LT1\endcsname{\color[rgb]{0,1,0}}%
      \expandafter\def\csname LT2\endcsname{\color[rgb]{0,0,1}}%
      \expandafter\def\csname LT3\endcsname{\color[rgb]{1,0,1}}%
      \expandafter\def\csname LT4\endcsname{\color[rgb]{0,1,1}}%
      \expandafter\def\csname LT5\endcsname{\color[rgb]{1,1,0}}%
      \expandafter\def\csname LT6\endcsname{\color[rgb]{0,0,0}}%
      \expandafter\def\csname LT7\endcsname{\color[rgb]{1,0.3,0}}%
      \expandafter\def\csname LT8\endcsname{\color[rgb]{0.5,0.5,0.5}}%
    \else
      \def\colorrgb#1{\color{black}}%
      \def\colorgray#1{\color[gray]{#1}}%
      \expandafter\def\csname LTw\endcsname{\color{white}}%
      \expandafter\def\csname LTb\endcsname{\color{black}}%
      \expandafter\def\csname LTa\endcsname{\color{black}}%
      \expandafter\def\csname LT0\endcsname{\color{black}}%
      \expandafter\def\csname LT1\endcsname{\color{black}}%
      \expandafter\def\csname LT2\endcsname{\color{black}}%
      \expandafter\def\csname LT3\endcsname{\color{black}}%
      \expandafter\def\csname LT4\endcsname{\color{black}}%
      \expandafter\def\csname LT5\endcsname{\color{black}}%
      \expandafter\def\csname LT6\endcsname{\color{black}}%
      \expandafter\def\csname LT7\endcsname{\color{black}}%
      \expandafter\def\csname LT8\endcsname{\color{black}}%
    \fi
  \fi
    \setlength{\unitlength}{0.0500bp}%
    \ifx\gptboxheight\undefined%
      \newlength{\gptboxheight}%
      \newlength{\gptboxwidth}%
      \newsavebox{\gptboxtext}%
    \fi%
    \setlength{\fboxrule}{0.5pt}%
    \setlength{\fboxsep}{1pt}%
\begin{picture}(5760.00,3600.00)%
    \gplgaddtomacro\gplbacktext{%
      \csname LTb\endcsname
      \put(814,704){\makebox(0,0)[r]{\strut{}$10^{0}$}}%
      \csname LTb\endcsname
      \put(814,1263){\makebox(0,0)[r]{\strut{}$10^{1}$}}%
      \csname LTb\endcsname
      \put(814,1822){\makebox(0,0)[r]{\strut{}$10^{2}$}}%
      \csname LTb\endcsname
      \put(814,2380){\makebox(0,0)[r]{\strut{}$10^{3}$}}%
      \csname LTb\endcsname
      \put(814,2939){\makebox(0,0)[r]{\strut{}$10^{4}$}}%
      \csname LTb\endcsname
      \put(946,484){\makebox(0,0){\strut{}$10^{-6}$}}%
      \csname LTb\endcsname
      \put(1682,484){\makebox(0,0){\strut{}$10^{-5}$}}%
      \csname LTb\endcsname
      \put(2418,484){\makebox(0,0){\strut{}$10^{-4}$}}%
      \csname LTb\endcsname
      \put(3154,484){\makebox(0,0){\strut{}$10^{-3}$}}%
      \csname LTb\endcsname
      \put(3891,484){\makebox(0,0){\strut{}$10^{-2}$}}%
      \csname LTb\endcsname
      \put(4627,484){\makebox(0,0){\strut{}$10^{-1}$}}%
      \csname LTb\endcsname
      \put(5363,484){\makebox(0,0){\strut{}$10^{0}$}}%
    }%
    \gplgaddtomacro\gplfronttext{%
      \csname LTb\endcsname
      \put(209,1821){\rotatebox{-270}{\makebox(0,0){\strut{}Frequency}}}%
      \put(3154,154){\makebox(0,0){\strut{}Pose error}}%
      \put(3154,2829){\makebox(0,0){\strut{}}}%
      \csname LTb\endcsname
      \put(4508,3427){\makebox(0,0)[r]{\strut{}Translation error [mm]}}%
      \csname LTb\endcsname
      \put(4508,3207){\makebox(0,0)[r]{\strut{}Rotation error [deg]}}%
    }%
    \gplbacktext
    \put(0,0){\includegraphics{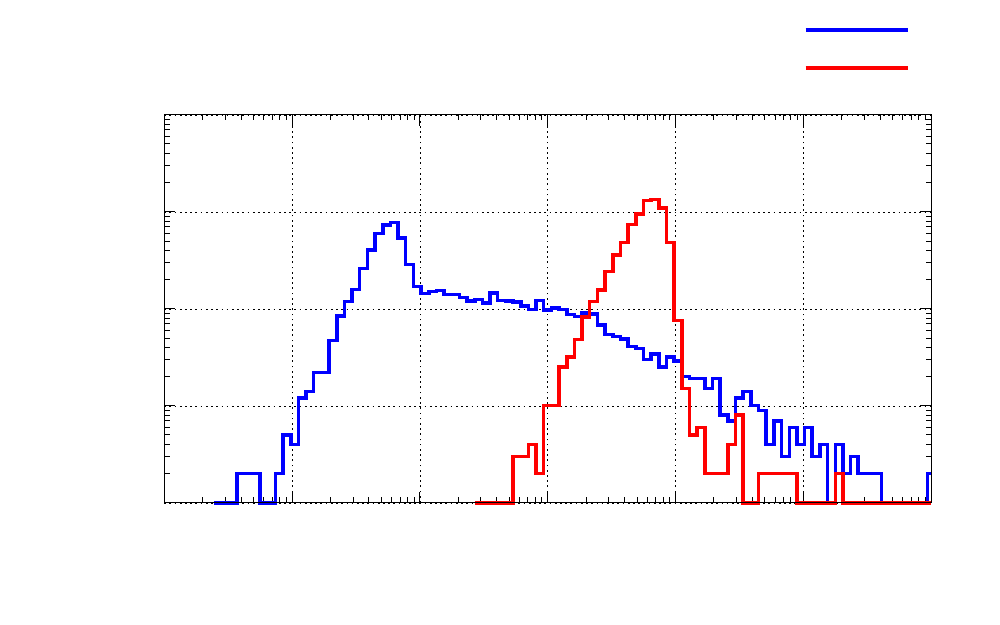}}%
    \gplfronttext
  \end{picture}%
\endgroup
  }
  \caption{Histogram of translation and rotation error of the poses computed from the forward kinematics on found solutions w.r.t.\ the desired poses. There are \GBZeroErrorsT{} zero translation errors and \GBZeroErrorsR{} zero rotation errors.}
  \labelfig{Ill:GB:errors}
\end{figure}
\subsection{Direct application of polynomial solver}\labelsec{ill:naive}
\noindent First, we solve Problem \refeqb{IKT:deg4} directly by polynomial optimization toolbox GloptiPoly \cite{Gloptipoly} for relaxation order two with the use of MOSEK \cite{Mosek} as the semidefinite problem solver.

Our dataset consists of \num{10000} randomly chosen poses within and outside of the working space of the manipulator, as shown in~\reffig{Ill:gloptipoly:3D}. For poses marked by red color, GloptiPoly failed to compute the solution or report infeasibility. That is mainly due to the small relaxation order of the semidefinite relaxation of the POP. There is \gloptipolyFailed{} of such poses, which makes this approach quite impractical. Computations for the next degree three relaxation is still often feasible on contemporary computers but takes hours to finish.
\subsection{POP with symbolic reduction}
\noindent Since the performance of GloptiPoly highly depends on the number of variables of the POP and the relaxation degree, which grows with the degrees of the polynomials contained in the POP, we first symbolically reduce polynomials $p_j$ and $q_i$ and then solve the resulting POP by GloptiPoly.

Firstly, we take advantage of the simplified structure of the \textit{KUKA LBR IIWA} manipulator, i.e. that the joint angle $\theta_4$ is constant within the self-motion, and therefore it plays no role in the objective function \refeqb{IKT:objectiveSOS}. That allows us to eliminate the variables $c_4$ and $s_4$ from the equations. Secondly, we reduce the polynomials $p_j$ and $q_j$ symbolically with the use of Theorem~\ref{THM}.

In this way, we have reduced the number of variables from 14 to 12, and we have reduced the degrees of the polynomials to two, which significantly speeds up the SDP solver. Practical experiments showed that GloptiPoly is now able to compute IK for more poses with the same relaxation order two than by the na\"ive approach used before, see \reffig{Ill:GB:3D}. Now only \GBFailed{} of poses failed to be solved on the same dataset as in \refsec{ill:naive}.

To verify the numerical stability of the solver, we have computed the forward kinematics problem based on the found joint angles from the IK problem. Then, we have computed the translation error and rotation error of this pose w.r.t.\ the desired pose. The histogram of the translation and rotation error can be seen in \reffig{Ill:GB:errors}.

For practical applications, the execution time of this method is important. In \reffig{Ill:GB:times}, we show histograms of the execution time of the on-line phase of GloptiPoly as well as of the symbolic reduction of the initial polynomials to degree two polynomials. We observe that our execution times are comparable to computation times in~\cite{Dai2019} when using off-the-shelf POP and GB computation tools. We next plan to develop optimized solvers leading to considerable speedup, as it was done in solving polynomial systems in computer vision~\cite{Larsson-CVPR-2018}.

\begin{figure}[t]
  \resizebox{0.5125\textwidth}{!}{
\begingroup
  \makeatletter
  \gdef\gplbacktext{}%
  \gdef\gplfronttext{}%
  \makeatother
  \ifGPblacktext
    \def\colorrgb#1{}%
    \def\colorgray#1{}%
  \else
    \ifGPcolor
      \def\colorrgb#1{\color[rgb]{#1}}%
      \def\colorgray#1{\color[gray]{#1}}%
      \expandafter\def\csname LTw\endcsname{\color{white}}%
      \expandafter\def\csname LTb\endcsname{\color{black}}%
      \expandafter\def\csname LTa\endcsname{\color{black}}%
      \expandafter\def\csname LT0\endcsname{\color[rgb]{1,0,0}}%
      \expandafter\def\csname LT1\endcsname{\color[rgb]{0,1,0}}%
      \expandafter\def\csname LT2\endcsname{\color[rgb]{0,0,1}}%
      \expandafter\def\csname LT3\endcsname{\color[rgb]{1,0,1}}%
      \expandafter\def\csname LT4\endcsname{\color[rgb]{0,1,1}}%
      \expandafter\def\csname LT5\endcsname{\color[rgb]{1,1,0}}%
      \expandafter\def\csname LT6\endcsname{\color[rgb]{0,0,0}}%
      \expandafter\def\csname LT7\endcsname{\color[rgb]{1,0.3,0}}%
      \expandafter\def\csname LT8\endcsname{\color[rgb]{0.5,0.5,0.5}}%
    \else
      \def\colorrgb#1{\color{black}}%
      \def\colorgray#1{\color[gray]{#1}}%
      \expandafter\def\csname LTw\endcsname{\color{white}}%
      \expandafter\def\csname LTb\endcsname{\color{black}}%
      \expandafter\def\csname LTa\endcsname{\color{black}}%
      \expandafter\def\csname LT0\endcsname{\color{black}}%
      \expandafter\def\csname LT1\endcsname{\color{black}}%
      \expandafter\def\csname LT2\endcsname{\color{black}}%
      \expandafter\def\csname LT3\endcsname{\color{black}}%
      \expandafter\def\csname LT4\endcsname{\color{black}}%
      \expandafter\def\csname LT5\endcsname{\color{black}}%
      \expandafter\def\csname LT6\endcsname{\color{black}}%
      \expandafter\def\csname LT7\endcsname{\color{black}}%
      \expandafter\def\csname LT8\endcsname{\color{black}}%
    \fi
  \fi
    \setlength{\unitlength}{0.0500bp}%
    \ifx\gptboxheight\undefined%
      \newlength{\gptboxheight}%
      \newlength{\gptboxwidth}%
      \newsavebox{\gptboxtext}%
    \fi%
    \setlength{\fboxrule}{0.5pt}%
    \setlength{\fboxsep}{1pt}%
\begin{picture}(3888.00,3600.00)%
    \gplgaddtomacro\gplbacktext{%
      \csname LTb\endcsname
      \put(814,704){\makebox(0,0)[r]{\strut{}$10^{0}$}}%
      \csname LTb\endcsname
      \put(814,1239){\makebox(0,0)[r]{\strut{}$10^{1}$}}%
      \csname LTb\endcsname
      \put(814,1774){\makebox(0,0)[r]{\strut{}$10^{2}$}}%
      \csname LTb\endcsname
      \put(814,2309){\makebox(0,0)[r]{\strut{}$10^{3}$}}%
      \csname LTb\endcsname
      \put(814,2844){\makebox(0,0)[r]{\strut{}$10^{4}$}}%
      \csname LTb\endcsname
      \put(814,3379){\makebox(0,0)[r]{\strut{}$10^{5}$}}%
      \csname LTb\endcsname
      \put(946,484){\makebox(0,0){\strut{}$2$}}%
      \csname LTb\endcsname
      \put(1434,484){\makebox(0,0){\strut{}$4$}}%
      \csname LTb\endcsname
      \put(1922,484){\makebox(0,0){\strut{}$6$}}%
      \csname LTb\endcsname
      \put(2410,484){\makebox(0,0){\strut{}$8$}}%
      \csname LTb\endcsname
      \put(2897,484){\makebox(0,0){\strut{}$10$}}%
      \csname LTb\endcsname
      \put(3385,484){\makebox(0,0){\strut{}$12$}}%
      \csname LTb\endcsname
      \put(3873,484){\makebox(0,0){\strut{}$14$}}%
    }%
    \gplgaddtomacro\gplfronttext{%
      \csname LTb\endcsname
      \put(209,2041){\rotatebox{-270}{\makebox(0,0){\strut{}Frequency}}}%
      \put(2409,154){\makebox(0,0){\strut{}GloptiPoly execution time [s]}}%
    }%
    \gplbacktext
    \put(0,0){\includegraphics{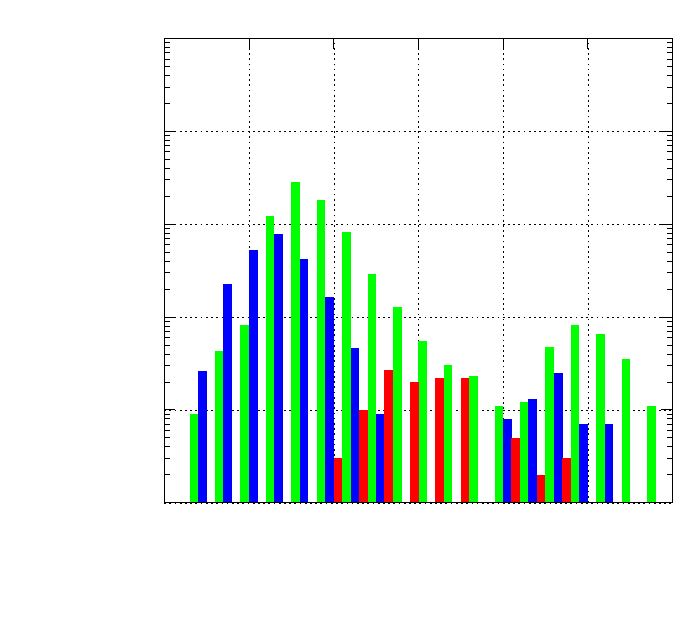}}%
    \gplfronttext
  \end{picture}%
\endgroup
  }%
  \resizebox{0.44\textwidth}{!}{
\begingroup
  \makeatletter
  \gdef\gplbacktext{}%
  \gdef\gplfronttext{}%
  \makeatother
  \ifGPblacktext
    \def\colorrgb#1{}%
    \def\colorgray#1{}%
  \else
    \ifGPcolor
      \def\colorrgb#1{\color[rgb]{#1}}%
      \def\colorgray#1{\color[gray]{#1}}%
      \expandafter\def\csname LTw\endcsname{\color{white}}%
      \expandafter\def\csname LTb\endcsname{\color{black}}%
      \expandafter\def\csname LTa\endcsname{\color{black}}%
      \expandafter\def\csname LT0\endcsname{\color[rgb]{1,0,0}}%
      \expandafter\def\csname LT1\endcsname{\color[rgb]{0,1,0}}%
      \expandafter\def\csname LT2\endcsname{\color[rgb]{0,0,1}}%
      \expandafter\def\csname LT3\endcsname{\color[rgb]{1,0,1}}%
      \expandafter\def\csname LT4\endcsname{\color[rgb]{0,1,1}}%
      \expandafter\def\csname LT5\endcsname{\color[rgb]{1,1,0}}%
      \expandafter\def\csname LT6\endcsname{\color[rgb]{0,0,0}}%
      \expandafter\def\csname LT7\endcsname{\color[rgb]{1,0.3,0}}%
      \expandafter\def\csname LT8\endcsname{\color[rgb]{0.5,0.5,0.5}}%
    \else
      \def\colorrgb#1{\color{black}}%
      \def\colorgray#1{\color[gray]{#1}}%
      \expandafter\def\csname LTw\endcsname{\color{white}}%
      \expandafter\def\csname LTb\endcsname{\color{black}}%
      \expandafter\def\csname LTa\endcsname{\color{black}}%
      \expandafter\def\csname LT0\endcsname{\color{black}}%
      \expandafter\def\csname LT1\endcsname{\color{black}}%
      \expandafter\def\csname LT2\endcsname{\color{black}}%
      \expandafter\def\csname LT3\endcsname{\color{black}}%
      \expandafter\def\csname LT4\endcsname{\color{black}}%
      \expandafter\def\csname LT5\endcsname{\color{black}}%
      \expandafter\def\csname LT6\endcsname{\color{black}}%
      \expandafter\def\csname LT7\endcsname{\color{black}}%
      \expandafter\def\csname LT8\endcsname{\color{black}}%
    \fi
  \fi
    \setlength{\unitlength}{0.0500bp}%
    \ifx\gptboxheight\undefined%
      \newlength{\gptboxheight}%
      \newlength{\gptboxwidth}%
      \newsavebox{\gptboxtext}%
    \fi%
    \setlength{\fboxrule}{0.5pt}%
    \setlength{\fboxsep}{1pt}%
\begin{picture}(3310.00,3600.00)%
    \gplgaddtomacro\gplbacktext{%
      \csname LTb\endcsname
      \put(-119,704){\makebox(0,0)[r]{\strut{}}}%
      \csname LTb\endcsname
      \put(-119,1239){\makebox(0,0)[r]{\strut{}}}%
      \csname LTb\endcsname
      \put(-119,1774){\makebox(0,0)[r]{\strut{}}}%
      \csname LTb\endcsname
      \put(-119,2309){\makebox(0,0)[r]{\strut{}}}%
      \csname LTb\endcsname
      \put(-119,2844){\makebox(0,0)[r]{\strut{}}}%
      \csname LTb\endcsname
      \put(-119,3379){\makebox(0,0)[r]{\strut{}}}%
      \csname LTb\endcsname
      \put(255,484){\makebox(0,0){\strut{}$2$}}%
      \csname LTb\endcsname
      \put(738,484){\makebox(0,0){\strut{}$2.2$}}%
      \csname LTb\endcsname
      \put(1221,484){\makebox(0,0){\strut{}$2.4$}}%
      \csname LTb\endcsname
      \put(1705,484){\makebox(0,0){\strut{}$2.6$}}%
      \csname LTb\endcsname
      \put(2188,484){\makebox(0,0){\strut{}$2.8$}}%
      \csname LTb\endcsname
      \put(2671,484){\makebox(0,0){\strut{}$3$}}%
    }%
    \gplgaddtomacro\gplfronttext{%
      \csname LTb\endcsname
      \put(1463,154){\makebox(0,0){\strut{}Maple execution time [s]}}%
      \put(1463,3269){\makebox(0,0){\strut{}}}%
      \csname LTb\endcsname
      \put(2388,3206){\makebox(0,0)[r]{\strut{}Feasible poses}}%
      \csname LTb\endcsname
      \put(2388,2986){\makebox(0,0)[r]{\strut{}Infeasible poses}}%
      \csname LTb\endcsname
      \put(2388,2766){\makebox(0,0)[r]{\strut{}Poses failed to compute}}%
    }%
    \gplbacktext
    \put(0,0){\includegraphics{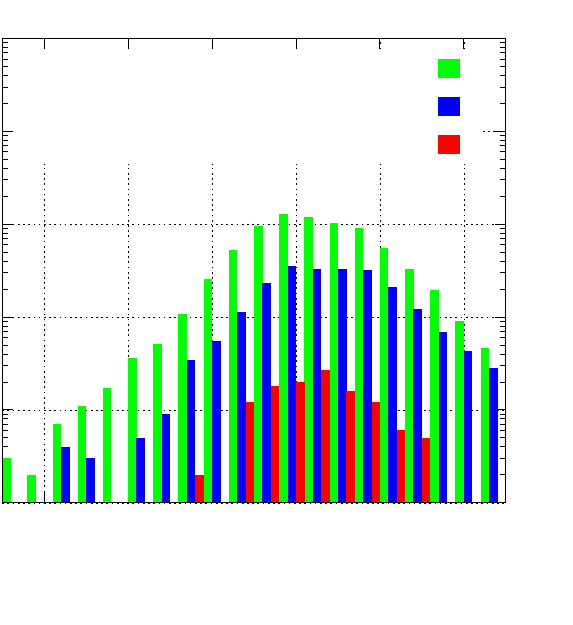}}%
    \gplfronttext
  \end{picture}%
\endgroup
  }
  \caption{Histograms of execution time. Left: execution time of the on-line phase of GloptiPoly. Right: execution time of the symbolic reduction and elimination in Maple.}
  \labelfig{Ill:GB:times}
\end{figure}

\section{Conclusions}
\noindent We presented a practical method for globally solving the 7DOF IK problem with a polynomial objective function. Our solution is accurate and can solve/decide infeasibility in $99~\%$ cases out of \num{10000} cases tested on the KUKA LBR IIWA manipulator. The code is open-sourced at \url{https://github.com/PavelTrutman/Global-7DOF-IKT}.

For future work, we consider two interesting directions. First, it is desirable to return a certificate of infeasibility when POP constraints are incompatible, e.g., by computing a SOS representation for the polynomial -1 on the quadratic module corresponding to the feasible set~\cite{ks13}. Secondly, it is interesting to exploit the structure of our POP to prove the  exactness of the observed second SDP relaxation in the moment-SOS hierarchy.
\hide{
In the case that the POP constraints are incompatible (i.e.\ the feasible set of admissible parameters is empty), it would be desirable to return a certificate of infeasibility. This certificate can be either numerical (obtained by solving the moment-SOS hierarchy with an SDP solver) or symbolic (obtained by Gr\"obner basis methods). It can be obtained e.g.\ by computing an SOS representation for the polynomial -1 (or any other negative polynomial) on the quadratic module corresponding to the feasible set, see e.g.\ \cite{ks13} in the specific case of certifying emptyness of spectrahedra (SDP feasibility sets).

It would be interesting to exploit the specific structure of the POP studied in this paper to prove (maybe under some assumptions on the data) exactness of the first or the second SDP relaxation in the moment-SOS hierarchy, i.e.\ that solving this relaxation always solves the original POP. For Euclidean distance POP arising in computer vision, this was achieved in \cite{aat12} by arguing on the curvature properties of the Lagrangian and its SOS representation in the quadratic module. 
}
\section*{Acknowledgments}
\noindent P.~Trutman was supported by the EU Structural and Investment Funds, Operational Programe Research, Development and Education under the project IMPACT (reg.\ no.\ CZ$.02.1.01/0.0/0.0/15\_003/0000468$) and Grant Agency of the CTU Prague project SGS19/173/OHK3/3T/13.
T.~Pajdla was supported by EU Structural and Investment Funds, Operational Programe Research, Development and Education under the project The Robotics for Industry 4.0 project (reg.\ no.\ CZ$.02.1.01/0.0/0.0/15\_003/0000470$, the EU H2020 ARtwin No.~856994, and EU H2020 SPRING No.~871245 Projects.
Didier Henrion and Mohab Safey El Din are supported by the European Union’s Horizon2020 research and innovation programme under the Marie Skłodowska-Curie grant agreement N°813211 (POEMA). 
Mohab Safey El Din is supported by the ANR grants ANR-18-CE33-0011 Sesame, ANR-19-CE40-0018 De Rerum Natura, ANR-19-CE48-0015 ECARP and the CAMiSAdo PGMO project.
\bibliographystyle{plain}
\bibliography{citations,Pajdla}
\end{document}